\documentclass[11pt]{article}
\usepackage{amsmath}
\usepackage{amssymb}
\usepackage{amsthm}
\usepackage{mathptmx}
\usepackage{fullpage}
\usepackage{aliascnt}
\usepackage[hidelinks]{hyperref}
\usepackage{cleveref}

\newcommand{\R}{\mathbb{R}}
\newcommand{\Z}{\mathbb{Z}}
\newcommand{\Max}{\operatorname{MAX}}
\newcommand{\ReLU}{\operatorname{ReLU}}

\newtheorem{theorem}{Theorem}[section]
\newaliascnt{lemma}{theorem}
\newtheorem{lemma}[lemma]{Lemma}
\aliascntresetthe{lemma}
\newaliascnt{corollary}{theorem}
\newtheorem{corollary}[corollary]{Corollary}
\aliascntresetthe{corollary}
\theoremstyle{definition}
\newaliascnt{definition}{theorem}
\newtheorem{definition}[definition]{Definition}
\aliascntresetthe{definition}
\newaliascnt{remark}{theorem}
\newtheorem{remark}[remark]{Remark}
\aliascntresetthe{remark}

\title{\textbf{A Hand-Checkable Proof That Two Hidden ReLU Layers Compute the Maximum of Six Numbers}}
\author{Dimitrios Myrisiotis\\Great Bay University\\China}
\date{}

\begin{document}
\maketitle

\begin{abstract}
Exactly computing the maximum function is a standard test case for studying depth in ReLU networks.
Two hidden layers are known to suffice for up to twelve inputs through computer-assisted constructions.
For six real inputs, we give an explicit hexagon identity whose local structure yields a self-contained analytical proof of this depth bound.
The identity was found by computer-assisted search; we prove it through explicit cancellations that can be checked entirely by hand, without executing a verification program.
The identity also yields an explicit network with hidden widths $17$ and $41$, zero biases, and rational weights.
\end{abstract}

\section{Introduction}
\label{sec:introduction}

A ReLU network alternates affine maps with layers of gates, and each gate applies the rectifier $\ReLU(t)=\max(t,0)$ to one number.
Arora, Basu, Mianjy, and Mukherjee~\cite{ABMM2018} showed that every continuous piecewise linear function on $\R^n$ is computed exactly by a ReLU network with ${\lceil\log_2(n+1)\rceil}$ hidden layers.
A basic question about such networks is how many hidden layers are needed, and the maximum function is the standard test case.
Write $\Max_n(x)=\max\{x_0,\ldots,x_{n-1}\}$ for the largest of $n$ real numbers.
Since $\max(s,t)=t+\ReLU(s-t)$ and $t=\ReLU(t)-\ReLU(-t)$, the maximum of two numbers takes one hidden layer.
Pairing the inputs and taking maxima of pairs repeatedly therefore computes $\Max_n$ with $\lceil\log_2 n\rceil$ hidden layers.
Hertrich, Basu, Di Summa, and Skutella~\cite{HBDSS23} conjectured that $\lceil\log_2 n\rceil$ hidden layers are necessary to compute $\Max_n$.
Bakaev, Brunck, Hertrich, Stade, and Yehudayoff~\cite{BBHSY2026} disproved the conjecture.
A key step of their argument is a network with two hidden layers that computes $\Max_5$, and the proof that it computes $\Max_5$ is short and by hand.

Beyond five inputs, the proofs known to us are computer-assisted.
Rue\ss{} et~al.~\cite{RuessEtAl2026} proved that two hidden layers are necessary and sufficient to compute $\Max_n$ exactly for every $n$ with $3\le n\le12$.
Wang and Basu~\cite{WangBasu2026} independently constructed two-hidden-layer representations of $\Max_5$ through $\Max_8$.
Both works obtain their identities by computer-assisted searches over finite rational linear systems.
Rue\ss{} et~al. also report independent verification by an exact-arithmetic program.
The algebraic reductions in both papers give finite checks for the proposed identities after restricting to inputs whose coordinates are in sorted order.
Our contribution for six inputs is an explicit identity organized by the rotations and reflections of a hexagon, together with a complete analytical verification.
The proof explains how local relations among pairwise maxima produce the cancellations needed to recover the maximum.
Every step is written out for hand checking.

\subsection{Our Results}
\label{sec:results}

Place the indices $0,1,\ldots,5$ on the hexagon $0-1-2-3-4-5-0$ and read them modulo $6$.
Write $\mu_i=\max(x_i,x_{i+1})$ for the maximum over the edge from $i$ to $i+1$.
The identity below uses twelve of the fifteen pairwise maxima $\max(x_a,x_b)$: those over the six edges and over the six short diagonals $\{i,i+2\}$.

\begin{theorem}[restated as \Cref{thm:hexagon}]
\label{ithm:hexagon}
There are thirty-nine linear forms $F_1,\ldots,F_{39}$ in the maxima over the edges and the short diagonals of the hexagon, each with integer coefficients summing to zero, and signs $s_1,\ldots,s_{39}$, each equal to $1$ or $-1$, such that for every $x\in\R^6$,
\[
12\Max_6(x)=2\sum_{i=0}^{5}\mu_i(x)+\sum_{k=1}^{39}s_k\left|F_k(x)\right|.
\]
\end{theorem}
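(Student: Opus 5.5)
The statement is existential, so the proof has two parts: exhibit the forms and signs, then verify the identity on all of $\R^6$. I will organize the thirty-nine forms into orbits under the dihedral group $D_6$ of the hexagon. The right-hand side is then $D_6$-invariant up to relabeling the forms, and so is the left-hand side $12\Max_6$. Plausible orbit sizes giving $39$ are $6+6+6+6+12+3$ or similar. The $3$ would correspond to forms attached to the three long diagonals, such as $\mu_i-\mu_{i+3}$ or combinations of opposite short diagonals. I would write down one representative form per orbit, with its sign, and take the rest as images.

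Each form has integer coefficients summing to zero. Writing $|F|=2\max(F,0)-F$, the identity is therefore equivalent to a statement about $\sum_k s_k\max(F_k,0)$ together with the linear part $2\sum\mu_i-\sum s_kF_k$. Both sides are continuous and positively homogeneous piecewise linear functions. Both are also invariant under adding a constant $c$ to every $x_j$: pairwise maxima shift by $c$, and zero-sum forms are unchanged.

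To verify the identity, I use the symmetry to reduce to one region. Both sides are continuous and piecewise linear, so it suffices to check equality on each open region where the coordinates are distinct. Using the $D_6$ action, I may assume the maximum is attained at index $0$ and that $x_1\ge x_5$. On each such region every $\mu_i$ and every short-diagonal maximum is a single coordinate, determined only by comparisons between neighbours and between vertices at distance two. So I would not sort all six coordinates. Instead I would split according to the local pattern: which of $x_1,x_5$, and which of $x_2,x_4$, wins against its neighbours, and so on. With $x_0$ maximal, $\mu_5=\mu_0=x_0$, and the diagonals $\{4,0\},\{0,2\}$ also give $x_0$. The left-hand side is $12x_0$, and the task is to show the remaining terms telescope.

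The key step is local cancellation lemmas. For a triangle $\{i,i+1,i+2\}$ of one short diagonal and two edges, the standard identity $\max(a,b)+\max(b,c)+\max(a,c)-\max(a,b,c)-\ldots$ type relations show that certain absolute values such as $|\mu_i-\mu_{i+1}|$ and $|\mu_i+\mu_{i+1}-2\max(x_i,x_{i+2})|$ collapse to explicit linear expressions in the coordinates once the local order is known. I would prove one lemma per orbit type. Each lemma evaluates the representative $|F_k|$ in terms of the local order of the few coordinates it involves, and I would then sum with the signs.

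The main obstacle is the global bookkeeping. Each absolute value depends on comparisons of up to five coordinates, and the number of local patterns is large. My first approach is to group forms that share the same support arc and show that each group's contribution depends only on the arc-maximum, giving a partial identity of the form $\sum_{\text{group}} s_k|F_k| = c\cdot\max(\text{arc})-\text{(linear in }\mu)$. The arcs of length $3$, $4$ and $5$ would then chain together: arc-maxima combine into maxima of longer arcs via the two-input formula, until the whole hexagon yields $12\Max_6$. If that chaining fails to close, I would fall back on the finite check over the $6!/12=60$ dihedral classes of orderings. Each class is a mechanical linear-algebra verification that can still be done by hand.
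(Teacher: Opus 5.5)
\textbf{Missing identity.} The statement is existential, and essentially all of its content is the particular identity. Your proposal never supplies it. You guess at orbit sizes (``$6+6+6+6+12+3$ or similar'') and say you would write down one representative per orbit. But dihedral symmetry, continuity, translation invariance and zero coefficient sums are only necessary conditions. Nothing in your plan determines a single coefficient or sign, and the paper's identity was found by an exact computer search. A proof has to commit to an explicit list, such as the paper's:
the six forms $\Theta_j=2(\mu_j-\mu_{j+2})$ with sign $+1$;
for each opposite pair $\mu_k,\mu_{k+3}$, $k\in\{0,1,2\}$, the three pairings of the remaining four edges, namely $\Omega_k=\mu_{k+2}+\mu_{k+5}-\mu_{k+1}-\mu_{k+4}$ with sign $+1$, and $\Pi_k=\mu_{k+1}+\mu_{k+2}-\mu_{k+4}-\mu_{k+5}$ and $\Gamma_k=\mu_{k+1}+\mu_{k+5}-\mu_{k+2}-\mu_{k+4}$ with sign $-1$;
and at each vertex $v$ the diagonal gates $\Phi_v^+=\delta_v-\mu_v-\mu_{v+1}+\mu_{v+3}$ and $\Phi_v^-=\delta_v-\mu_{v-1}-\mu_{v-2}+\mu_{v+2}$ with sign $+1$, and $\Psi_v^+=\delta_v-\mu_{v-1}-\mu_{v+1}+\mu_{v+3}$ and $\Psi_v^-=\delta_v-\mu_v-\mu_{v-2}+\mu_{v+2}$ with sign $-1$.
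Up to sign, its dihedral orbits have sizes $6+3+3+3+12+12$. The three-element orbits are indexed by pairs of opposite edges, not long diagonals, so your guessed shape is also off.

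\textbf{Flaw in the verification plan.} Your reduction to $x_0$ maximal matches the paper. The rest of the plan would fail even with the right forms in hand. You expect each $|F_k|$ to collapse to a linear expression once the local order is known, and your fallback treats each of the $60$ dihedral classes of orderings as a linear-algebra check. But fixing the order of the coordinates does not fix the signs of the forms. With $x_0=M$ and $p_i=M-\mu_i$, one gets $\Pi_0=p_4-p_1-p_2$. On the chamber $x_0>x_1>x_2>x_4>x_3>x_5$ this equals $x_1+x_2-x_4-x_0$, which takes both signs there.

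\textbf{How the paper avoids this.} It resolves only the diagonal maxima $\delta_2,\delta_3,\delta_4$, using the three-case triangle lemma. The identity $\Phi_v^+-\Psi_v^+=\mu_{v-1}-\mu_v=\Psi_v^--\Phi_v^-$ gives $Z_v=0$ whenever the middle vertex of the triangle is largest. Absolute values such as $|p_1+p_4-p_2|$ are left unresolved and cancel symbolically between the aggregates $Y_k$ (signed pairings for $\mu_k,\mu_{k+3}$) and $Z_v$ (signed diagonal gates at $v$):
$Y_0+Z_2=-|p_1+p_2-p_4|$ and $Y_2+Z_4=-|p_3+p_4-p_1|$;
these two terms are cancelled by $Z_1+Z_5$, which together with $Y_1=-(p_2+p_3)$ leaves $-2|p_1-p_3|-2|p_2-p_4|$;
that remainder cancels the matching part of $S_\Theta$.
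Your arc-chaining idea does not fit this structure either. Each diagonal gate at $v$ contains the far edge $\mu_{v+3}$ or $\mu_{v+2}$, so no natural group of terms depends only on the maximum over an arc.
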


The forms are listed explicitly in \Cref{def:hexagon-forms}, in three families: six doubled differences of edges at distance two, nine pairings of edges, and twenty-four forms through the short diagonals.
Each form becomes the input of one gate in the second hidden layer of the network below, so we call the forms gate forms and the twenty-four forms of the third family diagonal gates.
Twenty-one of the forms carry the sign $+1$ and eighteen carry $-1$.
Rotating or reflecting the hexagon permutes the absolute-value terms, preserves each term's plus or minus coefficient, and leaves the sum of the edge maxima unchanged (\Cref{lem:hexagon-rotation} and \Cref{rem:hexagon-reflection}).
Thus the displayed expression respects all twelve symmetries of the hexagon term by term.

Each pairwise maximum takes one hidden layer, as noted above.
An absolute value $|t|=\ReLU(t)+\ReLU(-t)$ of a linear combination of pairwise maxima then takes a second.
So \Cref{ithm:hexagon} gives a network with two hidden layers, and counting the gates carefully gives the following.

\begin{corollary}[restated as \Cref{cor:hexagon-network}]
\label{icor:hexagon-network}
There is a ReLU network with two hidden layers, of widths $17$ and $41$, and with all biases zero, that computes $\Max_6$ on all of $\R^6$.
Every weight of the network is an integer multiple of $1/6$.
\end{corollary}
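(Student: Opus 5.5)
The plan is to derive the corollary from \Cref{ithm:hexagon} by reading the identity as a two-layer network and then counting gates carefully. After dividing by $12$ the identity reads
\[
\Max_6(x)=\tfrac16\sum_{i=0}^{5}\mu_i(x)+\tfrac1{12}\sum_{k=1}^{39}s_k\left|F_k(x)\right|,
\]
so the output is a linear function of the twelve pairwise maxima together with the $39$ absolute values.

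\textbf{First hidden layer.} I will write each of the twelve pairwise maxima as
\[
\max(x_a,x_b)=x_b+\ReLU(x_a-x_b).
\]
The twelve gates $\ReLU(x_a-x_b)$ over the edges and short diagonals cost twelve units. The linear terms $x_b$ must also be carried through the layer. Rather than spending two gates per coordinate, I will use $x_j=\ReLU(x_j)-\ReLU(-x_j)$ sparingly.

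The key observation is that every gate form $F_k$ has integer coefficients summing to zero. Hence the linear parts of the maxima contribute to $F_k$ only a combination $\sum_j c_jx_j$ with $\sum_j c_j=0$. Such a combination is a combination of differences $x_j-x_{j'}$. Each difference can be written as a difference of two $\ReLU$ gates, or rewritten through identities such as
\[
x_j-x_{j+1}=\ReLU(x_j-x_{j+1})-\ReLU(x_{j+1}-x_j),
\]
and some of these gates may be shared with the twelve already present.

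To reach width $17$, I expect to add five more units carrying a basis of the zero-sum subspace, for example $\ReLU(x_j-x_0)-\ReLU(x_0-x_j)$-type quantities. A cleaner choice is to pick the $b$'s so that the differences appearing are themselves edge differences. I also need the output term $\frac16\sum\mu_i$ and one more linear channel carrying, say, $x_0$ itself. I will arrange the count $12+5=17$ by exploiting the freedom to choose which argument plays the role of $b$ in each pairwise maximum.

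\textbf{Second hidden layer.} For each $k$ I will use
\[
|F_k|=\ReLU(F_k)+\ReLU(-F_k),
\]
which takes two gates per form and gives $78$ gates. That is far more than $41$, so the count must be organized differently. Since $s_k|F_k|=s_k(2\ReLU(F_k)-F_k)$, each absolute value needs only one gate, provided the leftover linear term $-\sum_k s_kF_k$ is absorbed elsewhere. This gives $39$ gates.

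The remaining two units carry the linear part of the output, namely $\frac16\sum\mu_i-\frac1{12}\sum s_kF_k$, as $\ReLU(L)-\ReLU(-L)$. Since each $F_k$ is a linear combination of the first-layer outputs, $L$ is linear in the first-layer outputs, so two gates suffice. The total is $39+2=41$.

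\textbf{Weights and biases.} All biases are zero because every map involved is linear and homogeneous. The weights are built from the integer coefficients of the $F_k$, the factor $2$, and the output factors $\frac16$ and $\frac1{12}$. Using $\frac1{12}\cdot2=\frac16$ on the $\ReLU(F_k)$ terms, together with $\frac16$ on the linear channel, every weight is an integer multiple of $1/6$. This requires checking that $L$ has half-integer coefficients absorbed appropriately, or else rescaling the linear channel so that its weights stay in $\frac16\Z$.

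\textbf{Main obstacle.} I expect the main obstacle to be the first-layer count of $17$. It requires exhibiting a concrete choice of representations of the twelve maxima whose linear parts, together with the output linear term, fit in five extra units. I would first try writing all linear parts relative to a single coordinate, and use the zero-sum property to cancel that coordinate in every $F_k$.
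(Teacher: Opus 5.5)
Your second layer is the paper's construction. You use one gate $\ReLU(F_k)$ per form via $|t|=2\ReLU(t)-t$ with readout weight $s_k/6$, plus $\ReLU(\pm L)$ for the leftover linear part, which works out to $L=\frac13(\mu_1+\mu_2+\mu_3)$. Your rescaling remark also settles the weights.

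The gap is the first layer. You name width $17$ as the main obstacle but never exhibit it, and the accounting you sketch does not reach $17$. After the twelve gates $\ReLU(x_a-x_b)$, you must still produce, linearly in the outputs, the coordinate $x_b$ for every index $b$ used as a second argument. The zero-sum forms $F_k$ need their differences, and $L$ needs one genuine coordinate, since its coefficients sum to $1$. A difference $x_a-x_b$ costs one extra gate only when $\{a,b\}$ is one of the twelve pairs (you add the reverse orientation). Otherwise it costs two, and a bare coordinate costs two. Your example, the differences $x_j-x_0$ for $j=1,\ldots,5$ plus a channel for $x_0$, therefore needs $4+2+2=8$ extra gates, because $\{0,3\}$ is a long diagonal. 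Writing everything relative to one base coordinate does cancel that coordinate in each $F_k$. But you still pay for every difference $x_b-x_{\text{base}}$, so the width is governed by how many distinct second indices you use, and your proposal never controls that number.

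The missing idea is to orient the twelve gates so that the set $B$ of second indices is as small as possible. $B$ must meet every edge and short diagonal, so its complement contains no such pair. Hence the complement is empty, a single vertex, or a long diagonal, and $|B|\ge4$. The paper attains $B=\{1,2,4,5\}$ by orienting the eight pairs containing $0$ or $3$ away from $0$ and $3$. Then you need only $x_1=\ReLU(x_1)-\ReLU(-x_1)$, costing two gates. You also need three differences along pairs inside $B$: $x_1-x_2$, $x_1-x_5$ and $x_2-x_4$, over the edge $\{1,2\}$ and the short diagonals $\{1,5\}$ and $\{2,4\}$. Each costs one reverse gate, $r_{21}$, $r_{51}$ and $r_{42}$ respectively. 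This gives $12+3+2=17$, and by \Cref{lem:hexagon-linear} every $F_k$ is then an integer combination of the $17$ outputs. Note also that the pairs inside $B$ form the $4$-cycle $1$--$2$--$4$--$5$--$1$, whose only hexagon edges are $\{1,2\}$ and $\{4,5\}$. So your suggestion to use only edge differences cannot reach $17$ in this scheme; at least one short diagonal is needed.
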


Repeating inputs, as in $\Max_5(x_0,\ldots,x_4)=\Max_6(x_0,\ldots,x_4,x_4)$, turns the same network into one for $\Max_n$ for every $1\le n\le6$, after merging the first-layer weights of the repeated inputs.
For $n=5$ this gives another network with two hidden layers for the maximum of five numbers, whose existence Bakaev et~al.~\cite{BBHSY2026} proved by a different argument.

\subsection{Our Techniques}
\label{sec:techniques}

The proof works with the identity directly and never lists the orderings of the six inputs.
Write $M=\Max_6(x)$ for the largest value.
Since the identity is invariant under rotations, we may rotate the hexagon so that a coordinate equal to $M$ sits at the vertex $0$.
Then the maxima over the two edges and the two short diagonals that end at the vertex $0$ equal $M$.
Each of the other four edge maxima $\mu_i$, for $i=1,\ldots,4$, equals $M-p_i$ for a gap $p_i\ge0$.
Every form without a diagonal becomes a linear expression in these four gaps.

A diagonal maximum $\max(x_{v-1},x_{v+1})$ is not determined by the edge maxima alone.
On the triangle $\{v-1,v,v+1\}$, however, it equals the maximum over one of the two edges at $v$, unless $x_v$ is the largest of the three.
If $x_v$ is the largest of the three, then the maxima over the two edges at $v$ are equal, and the absolute values of the four diagonal gates through that diagonal cancel.
The triangles $\{v-1,v,v+1\}$ that avoid the vertex $0$ are those with $v=2,3,4$, and each of them needs three cases, one for each vertex that may be largest.
In the two cases where $x_v$ is not the largest, each diagonal gate becomes a linear expression in the gaps, and its absolute value enters the sum.
At the two neighbors of the vertex $0$ the diagonal maximum itself equals $M$, so the diagonal gates there become such expressions without any case distinction.

At the vertices $0$ and $3$ the signed sum of the absolute values of the four diagonal gates vanishes on its own.
At each of the vertices $2$ and $4$, the absolute values of the diagonal gates cancel two pairing terms.
At the vertices $1$ and $5$ they cancel what remains of the pairing terms and part of the distance-two differences.
The sum of all thirty-nine signed absolute values becomes $2(p_1+p_2+p_3+p_4)$, which is the value the identity requires.

An exact computer search produced the hexagon identity.
Its verification is a separate analytical argument: the rotation reduction, the triangle relations, and all subsequent cancellations are established explicitly in the paper.
Thus the search supplies the identity, and the proof explains why it holds for every real input, including ties.

\subsection{Other Related Work}
\label{sec:related}

The identities of Rue\ss{} et~al.~\cite{RuessEtAl2026} and of Wang and Basu~\cite{WangBasu2026} are linear combinations of sums of atoms of the form $\max(A,B)$.
Here $A$ and $B$ are sums of pairwise maxima, and each sum of atoms collects the expressions obtained from one atom by permuting the input coordinates.
Each term $|F_k|$ of \Cref{ithm:hexagon} is a combination of such an atom and pairwise maxima, since $|A-B|=2\max(A,B)-A-B$.
Our identity has two features relevant to its proof: the twelve hexagon symmetries permute its absolute-value terms while preserving their coefficients, and it avoids the three long diagonals $\max(x_i,x_{i+3})$.
These features organize the verification around local relations on the hexagon.
Rotations place a largest input at one vertex, and every short diagonal $\{v-1,v+1\}$ lies in the triangle $\{v-1,v,v+1\}$ of consecutive vertices, where the three cases described in \Cref{sec:techniques} apply.
The resulting cancellations are displayed directly in the proof, so verifying this identity requires no enumeration of the orderings of all six inputs.
The depth bound of Bakaev et~al.~\cite{BBHSY2026} for all continuous piecewise linear functions is built on their representation of $\Max_5$.

\subsection{Discussion}
\label{sec:discussion}

The statement that two hidden layers compute $\Max_6$ was known~\cite{RuessEtAl2026,WangBasu2026}.
The hexagon identity gives a structural explanation of this known depth bound for six inputs.
Its terms are arranged so that local triangle relations expose the cancellations, and the complete verification can be followed by hand.
The earlier constructions cover more inputs: Rue\ss{} et~al. reach twelve, and Wang and Basu reach eight, using symmetric orbit sums and finite exact linear algebra~\cite{RuessEtAl2026,WangBasu2026}.
The identity also gives an explicit network of widths $17$ and $41$, with zero biases and weights that are integer multiples of $1/6$.

\subsection{Open Problems}
\label{sec:open}

\paragraph{Seven or more inputs.}

Two hidden layers compute $\Max_n$ for $n\le12$~\cite{RuessEtAl2026}.
The constructions for seven through twelve inputs in that work are computer-assisted.
A short proof for seven inputs that a reader can check by hand is not known to us.
One possible approach is to seek an identity on a heptagon using only edges and diagonals that join the neighbors of a vertex.
Rotating a largest coordinate to one vertex would then permit the same triangle reductions.
The remaining challenge is to find an identity whose terms admit comparably short cancellations.

\paragraph{A uniform explanation.}

Our proof evaluates every term of the identity after one rotation.
A conceptual construction of identities using maxima over polygon edges and diagonals, for each $3\le n\le12$, would explain the depth bound of~\cite{RuessEtAl2026}.

\paragraph{Smallest networks.}

The smallest widths of a two-hidden-layer network computing $\Max_6$ exactly are not known.
The network of \Cref{icor:hexagon-network} gives the upper bounds $17$ and $41$.

\phantomsection\addcontentsline{toc}{subsection}{AI Disclosure}
\subsection*{AI Disclosure}

Generative AI tools, including large language models (LLMs), were used to assist with proof development, literature searches, and manuscript preparation.
The author formulated the problem, guided the development, and checked the mathematical arguments and references.
The author takes responsibility for all claims and any errors.

\subsection{Paper Organization}
\label{sec:organization}

\Cref{sec:preliminaries} fixes the notions of a network and of the pairwise maxima on the hexagon.
\Cref{sec:hexagon} proves the identity of \Cref{ithm:hexagon}, and \Cref{sec:network} builds the network of \Cref{icor:hexagon-network}.

\section{Preliminaries}
\label{sec:preliminaries}

This section fixes what a network with two hidden layers is, and the pairwise maxima on the hexagon that both later sections use.
Throughout, the coordinates of $x\in\R^6$ are indexed by $0,\ldots,5$.

A ReLU network applies an affine map, then the rectifier $\ReLU(t)=\max(t,0)$ in every coordinate, and repeats; the last affine map gives the output.
The width of a hidden layer is its number of rectifiers, called gates.

\begin{definition}[ReLU network with two hidden layers]
\label{def:network}
Let $\ReLU$ act coordinatewise on vectors.
A ReLU network with two hidden layers and architecture $n\to u\to v\to1$ computes
\[
x\mapsto W_3\ReLU\left(W_2\ReLU\left(W_1x+b_1\right)+b_2\right)+b_3,
\]
with weight matrices $W_1\in\R^{u\times n}$, $W_2\in\R^{v\times u}$ and $W_3\in\R^{1\times v}$, and biases $b_1\in\R^u$, $b_2\in\R^v$ and $b_3\in\R$.
Its hidden widths are $u$ and $v$, and it is \emph{zero-bias} if $b_1$, $b_2$ and $b_3$ are zero.
It computes a function $f$ on $\R^n$ if its output equals $f(x)$ for every $x\in\R^n$.
\end{definition}

In this form the second layer reads only the first, and the output reads only the second, so there are no skip connections.
The identity $t=\ReLU(t)-\ReLU(-t)$ lets a pair of second-layer gates pass a linear form in the outputs of the first layer through to the output.

For indices $a,b\in\{0,\ldots,5\}$ write $m_{ab}=\max(x_a,x_b)$ for the pairwise maximum, a function of $x$.
Place the indices on the hexagon $0-1-2-3-4-5-0$ and read every index modulo $6$.
The maxima over its edges and over its short diagonals are
\[
\mu_i=m_{i,i+1}\qquad\text{and}\qquad\delta_v=m_{v-1,v+1}\qquad(i,v\in\{0,\ldots,5\}).
\]
So $\mu_i$ is the maximum over the edge from $i$ to $i+1$, and $\delta_v$ is the maximum over the short diagonal that skips the vertex $v$.
Explicitly,
\[
\begin{aligned}
(\mu_0,\ldots,\mu_5)&=(m_{01},m_{12},m_{23},m_{34},m_{45},m_{05}),\\
(\delta_0,\ldots,\delta_5)&=(m_{15},m_{02},m_{13},m_{24},m_{35},m_{04}).
\end{aligned}
\]
The maxima $m_{03}$, $m_{14}$ and $m_{25}$ over the three long diagonals never occur in the identity of \Cref{ithm:hexagon}.
Where no confusion can arise, we name an edge by its maximum.
So the two edges at the vertex $v$ are $\mu_{v-1}$ and $\mu_v$, and the edges $\mu_i$ and $\mu_{i+3}$ are opposite.

\section{The Hexagon Identity}
\label{sec:hexagon}

This section defines the $39$ forms of \Cref{ithm:hexagon} in the maxima $\mu_i$ and $\delta_v$ of \Cref{sec:preliminaries}, and proves the identity.
The forms come in three families, each described by one formula in the indices modulo $6$.

\begin{definition}[Hexagon gate forms]
\label{def:hexagon-forms}
The hexagon gate forms are the following $39$ linear forms in the edge and diagonal maxima, in three families.
The name comes from the network of \Cref{sec:network}, whose second hidden layer has one gate computing $\ReLU(F(x))$ for each form $F$.
\begin{enumerate}
\item Distance-two edge differences.
For $j\in\{0,\ldots,5\}$, the edges $\mu_j$ and $\mu_{j+2}$ are separated by one edge, and
\[
\Theta_j=2(\mu_j-\mu_{j+2}).
\]
\item Pairings.
For $k\in\{0,1,2\}$, remove the opposite pair $\mu_k,\mu_{k+3}$.
The four remaining edges $\mu_{k+1}$, $\mu_{k+2}$, $\mu_{k+4}$ and $\mu_{k+5}$ split into two pairs in exactly three ways, and each splitting gives one form:
\[
\begin{aligned}
\Pi_k&=\mu_{k+1}+\mu_{k+2}-\mu_{k+4}-\mu_{k+5},\\
\Omega_k&=\mu_{k+2}+\mu_{k+5}-\mu_{k+1}-\mu_{k+4},\\
\Gamma_k&=\mu_{k+1}+\mu_{k+5}-\mu_{k+2}-\mu_{k+4}.
\end{aligned}
\]
In $\Pi_k$ the path of two edges through the vertex $k+2$ is compared with the path through the vertex $k+5$.
The form $\Omega_k$ compares the opposite pair $\mu_{k+2},\mu_{k+5}$ with the opposite pair $\mu_{k+1},\mu_{k+4}$.
Finally, $\Gamma_k$ compares the two edges adjacent to $\mu_k$ with the two edges adjacent to $\mu_{k+3}$.
\item Diagonal gates.
For $v\in\{0,\ldots,5\}$,
\[
\begin{aligned}
\Phi_v^+&=\delta_v-\mu_v-\mu_{v+1}+\mu_{v+3}, &\qquad \Psi_v^+&=\delta_v-\mu_{v-1}-\mu_{v+1}+\mu_{v+3},\\
\Phi_v^-&=\delta_v-\mu_{v-1}-\mu_{v-2}+\mu_{v+2}, &\qquad \Psi_v^-&=\delta_v-\mu_v-\mu_{v-2}+\mu_{v+2}.
\end{aligned}
\]
In words, $\Phi_v^+$ subtracts from $\delta_v$ the edge $\mu_v$ at $v$ and the next edge $\mu_{v+1}$, and adds the edge $\mu_{v+3}$ opposite to $\mu_v$.
Its mirror image through the vertex $v$ is $\Phi_v^-$, which subtracts $\mu_{v-1}$ and $\mu_{v-2}$, and adds $\mu_{v+2}$, the edge opposite to $\mu_{v-1}$.
The form $\Psi_v^+$ is $\Phi_v^+$ with the edge $\mu_v$ replaced by the other edge $\mu_{v-1}$ at $v$.
Likewise, $\Psi_v^-$ is $\Phi_v^-$ with $\mu_{v-1}$ replaced by $\mu_v$.
\end{enumerate}
The families contain $6$, $3\cdot3=9$ and $6\cdot4=24$ forms.
The $21$ forms $\Theta_j$, $\Omega_k$, $\Phi_v^+$, $\Phi_v^-$ are the positive gate forms, and the $18$ forms $\Pi_k$, $\Gamma_k$, $\Psi_v^+$, $\Psi_v^-$ are the negative gate forms.
\end{definition}

As linear forms in the twelve maxima, these $39$ forms are pairwise distinct, and no one of them is the negative of another.
The nonzero coefficients identify a form's family even when the form is negated: only the forms $\Theta_j$ have coefficients $\pm2$, and only the diagonal gates contain a diagonal maximum.
If $\Theta_{j'}=\pm\Theta_j$, then the coefficients $\pm2$ sit on the same two edges, so $\{j',j'+2\}=\{j,j+2\}$ modulo $6$.
Here $j'=j+2$ together with $j'+2=j$ would give $4\equiv0$ modulo $6$, so $j'=j$, and then the sign is $+$.
The edge supports of the pairings for $k=0,1,2$ are the distinct sets $\{1,2,4,5\}$, $\{0,2,3,5\}$ and $\{0,1,3,4\}$.
The three pairings with one support split it into two pairs in three different ways, and a form and its negative split their support in the same way, so no pairing equals another or its negative.

Every diagonal gate has the coefficient $+1$ at its diagonal, so no diagonal gate is the negative of another, and diagonal gates at different vertices contain different diagonals.
At one vertex $v$, the edge with coefficient $+1$ is $\mu_{v+3}$ in $\Phi_v^+$ and $\Psi_v^+$ and is $\mu_{v+2}$ in $\Phi_v^-$ and $\Psi_v^-$.
Of each of these two pairs, exactly one form contains $-\mu_v$, namely $\Phi_v^+$ and $\Psi_v^-$, so the four diagonal gates at $v$ are distinct.

Every hexagon gate form is a combination of pairwise maxima with integer coefficients summing to zero.
Indeed, $\Theta_j$ has coefficients $2,-2$, and every other form has two coefficients $+1$ and two coefficients $-1$.
We group the absolute values of the forms as follows:
\[
\begin{aligned}
S_\Theta&=\sum_{j=0}^{5}\left|\Theta_j\right|,\qquad
Y_k=\left|\Omega_k\right|-\left|\Pi_k\right|-\left|\Gamma_k\right|,\\
Z_v&=\left|\Phi_v^+\right|+\left|\Phi_v^-\right|-\left|\Psi_v^+\right|-\left|\Psi_v^-\right|.
\end{aligned}
\]
Here $Y_k$ collects the three pairings for the opposite pair $\mu_k,\mu_{k+3}$, and $Z_v$ collects the four diagonal gates that contain $\delta_v$.
We call $Y_k$ and $Z_v$ aggregates.

\begin{theorem}[restatement of \Cref{ithm:hexagon}]
\label{thm:hexagon}
For every $x\in\R^6$,
\begin{equation}\label{eq:hexagon-abs}
12\Max_6(x)=2\sum_{i=0}^{5}\mu_i+S_\Theta+\sum_{k=0}^{2}Y_k+\sum_{v=0}^{5}Z_v,
\end{equation}
where every $\mu_i$ and every form is evaluated at $x$.
Equivalently, for every $x\in\R^6$,
\begin{equation}\label{eq:hexagon-relu}
\begin{split}
\Max_6(x)={}&\frac13\left(\mu_1+\mu_2+\mu_3\right)+\frac16\sum_{j=0}^{5}\ReLU(\Theta_j)\\
&+\frac16\sum_{k=0}^{2}\left(\ReLU(\Omega_k)-\ReLU(\Pi_k)-\ReLU(\Gamma_k)\right)\\
&+\frac16\sum_{v=0}^{5}\left(\ReLU(\Phi_v^+)+\ReLU(\Phi_v^-)\right)\\
&-\frac16\sum_{v=0}^{5}\left(\ReLU(\Psi_v^+)+\ReLU(\Psi_v^-)\right).
\end{split}
\end{equation}
\end{theorem}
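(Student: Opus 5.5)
We would first dispose of the equivalence of \eqref{eq:hexagon-abs} and \eqref{eq:hexagon-relu}. Substituting $|t|=2\ReLU(t)-t$ into \eqref{eq:hexagon-abs} and dividing by $12$ turns each absolute value into a rectifier. It also leaves a linear term $-\frac1{12}\bigl(\sum_j\Theta_j+\sum_k(\Omega_k-\Pi_k-\Gamma_k)+\sum_v(\Phi_v^++\Phi_v^--\Psi_v^+-\Psi_v^-)\bigr)$. We would compute this term by symmetry and check that, together with $\frac16\sum_i\mu_i$, it equals $\frac13(\mu_1+\mu_2+\mu_3)$:
\begin{itemize}
\item $\sum_j\Theta_j=0$ by telescoping.
\item In $\Phi_v^+-\Psi_v^+$ and $\Phi_v^--\Psi_v^-$ the diagonal and the added edge cancel, leaving $(\mu_{v-1}-\mu_v)+(\mu_v-\mu_{v-1})=0$.
\item So only the pairings contribute. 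Since $\Omega_k-\Pi_k-\Gamma_k=-\mu_{k+1}-\mu_{k+2}+\mu_{k+4}+\mu_{k+5}$, summing over $k=0,1,2$ gives $-(\mu_1+2\mu_2+\mu_3)+(\mu_4+2\mu_5+\mu_0)+(\ldots)$.
\end{itemize}
This last sum is a short bookkeeping check, and we would verify it directly. So the real content is \eqref{eq:hexagon-abs}.

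For \eqref{eq:hexagon-abs} we would first establish invariance of the right-hand side under the rotation $x_i\mapsto x_{i+1}$. This rotation sends $\mu_i\mapsto\mu_{i+1}$ and $\delta_v\mapsto\delta_{v+1}$, and hence $\Theta_j\mapsto\Theta_{j+1}$ and $Z_v\mapsto Z_{v+1}$. The three pairing families permute with $k\mapsto k+1$ modulo $3$, up to sign: for instance $\Pi_{k+3}=-\Pi_k$. Absolute values absorb these signs. With invariance in hand, we may assume $x_0=M$. Then $\mu_0=\mu_5=\delta_1=\delta_5=M$, and we write $\mu_i=M-p_i$ for $i=1,\dots,4$ with $p_i\ge0$. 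The target becomes
\[
S_\Theta+\sum_kY_k+\sum_vZ_v=2(p_1+p_2+p_3+p_4).
\]

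Next we would handle the diagonals using the triangle relations. On $\{v-1,v,v+1\}$, if $x_v$ is largest then $\mu_{v-1}=\mu_v\ge\delta_v$. Otherwise $\delta_v=\max(\mu_{v-1},\mu_v)$.
\begin{itemize}
\item \emph{Case $\mu_{v-1}=\mu_v$.} Then $\Phi_v^+=\Psi_v^+$ and $\Phi_v^-=\Psi_v^-$, so $Z_v=0$.
\item \emph{Case $\delta_v=\max(\mu_{v-1},\mu_v)$.} Here $\delta_v$ is one of the two edges, and $\Phi_v^\pm,\Psi_v^\pm$ become explicit linear forms in the $p_i$. Writing $\delta_v=\mu_v+\ReLU(\mu_{v-1}-\mu_v)$, we would aim to express $Z_v$ through terms like $|p_a-p_b|$.
\end{itemize}
At $v=0$ we have $\delta_0=m_{15}$. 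If $x_0$ is largest, the triangle relation gives $\mu_5=\mu_0$ and $Z_0=0$ directly. At $v=1,5$ the diagonal equals $M$, so no cases arise. At $v=3$ the author indicates $Z_3$ vanishes, and we would confirm this by checking both subcases $\delta_3=\mu_2$ and $\delta_3=\mu_3$.

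The main obstacle is the final cancellation. Once everything is expressed in the gaps, the forms reduce as follows:
\begin{itemize}
\item $\Theta_j$ becomes things like $2(p_3-p_1)$, $2p_2$, and $-2p_4$.
\item The pairings become sums and differences such as $p_4+p_3-p_1$.
\item $Z_1,Z_5$ become expressions like $|p_1+p_2-p_4|-|p_2-p_4|$.
\item $Z_2,Z_4$ depend on the sign of $p_1-p_2$, respectively $p_3-p_4$.
\end{itemize}
We would pair terms as the techniques section suggests:
\begin{itemize}
\item $Z_2$ and $Z_4$ cancel two of the pairing absolute values in each of the sign cases.
\item $Z_1$ and $Z_5$ cancel the remaining pairing terms together with the nonlinear part of $|\Theta_1|+|\Theta_3|$ (the terms $|p_1-p_3|$ and $|p_2-p_4|$).
\item What survives is linear: $|\Theta_4|,|\Theta_5|,|\Theta_0|,|\Theta_2|$ contribute $2p_4+2p_1+2p_3+2p_2$ up to rearrangement.
\end{itemize}
We would carry this out by writing each aggregate explicitly in each subcase, ordering subcases by the relative order of $p_1,p_2$ and of $p_3,p_4$. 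We would rely on the reflection $i\mapsto -i$, which fixes vertex $0$, to halve the work.
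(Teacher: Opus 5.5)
Your route is the paper's: rotate a largest coordinate to vertex $0$, pass to the gaps $p_i$, and use the triangle relations to get $Z_0=Z_3=0$ and case-free $Z_1,Z_5$. You then split $Z_2,Z_4$ by the order of $p_1,p_2$ and of $p_3,p_4$; together with $Z_v=0$ when $\mu_{v-1}=\mu_v$, this is equivalent to the paper's three triangle cases. Next you cancel $Y_0$ against $Z_2$ and $Y_2$ against $Z_4$. Finally $Z_1+Z_5$ absorbs $Y_1=-(p_2+p_3)$, the leftovers $-|p_1+p_2-p_4|-|p_3+p_4-p_1|$, and the nonlinear part of $S_\Theta$. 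The ReLU form comes from $|t|=2\ReLU(t)-t$, as in the paper. The plan works, but two stated formulas are wrong and must be fixed.

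First, $\Omega_k-\Pi_k-\Gamma_k$ is not $-\mu_{k+1}-\mu_{k+2}+\mu_{k+4}+\mu_{k+5}$; that expression is just $-\Pi_k$. The edge $\mu_{k+1}$ enters $\Omega_k$, $-\Pi_k$ and $-\Gamma_k$ each with coefficient $-1$, so the correct value is $\mu_{k+2}+\mu_{k+4}+\mu_{k+5}-3\mu_{k+1}$. Its sum over $k=0,1,2$ is $2(\mu_0-\mu_1-\mu_2-\mu_3+\mu_4+\mu_5)$, and this is what makes $\frac16\sum_i\mu_i$ plus the linear correction equal $\frac13(\mu_1+\mu_2+\mu_3)$. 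With your formula you would get $\frac16(\mu_1+2\mu_2+2\mu_3+\mu_4)$ instead, and the equivalence of the two displayed identities would appear to fail.

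Second, since $\Theta_j=2(p_{j+2}-p_j)$ with $p_0=p_5=0$, the nonlinear absolute values are those of $\Theta_1=2(p_3-p_1)$ and $\Theta_2=2(p_4-p_2)$. The linear survivors are $|\Theta_0|,|\Theta_3|,|\Theta_4|,|\Theta_5|$, so you have swapped $\Theta_2$ and $\Theta_3$.

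As a target for your subcases, note that $Z_2=|p_2+p_4-p_1|-|p_1+p_4-p_2|$ in every case, and $Z_4$ satisfies the mirrored formula. This uniform expression is what makes the pairing with $Y_0$ and $Y_2$ clean.

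Your reflection shortcut is sound and saves work compared with the paper, which computes $Z_4$ separately. The map $i\mapsto -i$ fixes $x_0=M$, swaps $p_1\leftrightarrow p_4$ and $p_2\leftrightarrow p_3$, sends $Y_0+Z_2$ to $Y_2+Z_4$ and $Z_1$ to $Z_5$, and fixes $Y_1$. You must first prove that it permutes the aggregates in this way; the paper's remark on reflections establishes exactly this, although the paper's proof does not use it.
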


In \Cref{eq:hexagon-relu} every positive gate form carries the coefficient $1/6$ and every negative gate form carries $-1/6$.
The proof of \Cref{eq:hexagon-abs} places a largest coordinate at the vertex $0$ by a rotation, and then evaluates the four groups of terms directly.
Two lemmas prepare it.

\begin{lemma}[Rotation]
\label{lem:hexagon-rotation}
Let $t\in\{0,\ldots,5\}$, and for $x\in\R^6$ define $y\in\R^6$ by $y_i=x_{i+t}$.
Then there is a bijection $\sigma$ of the $39$ hexagon gate forms onto themselves that maps positive gate forms to positive gate forms and negative gate forms to negative gate forms, and such that the following holds.
For each form $F$, either $F(y)=\sigma(F)(x)$ for every $x\in\R^6$ or $F(y)=-\sigma(F)(x)$ for every $x\in\R^6$.
Moreover, for every $x\in\R^6$, $\sum_i\mu_i(y)=\sum_i\mu_i(x)$, and each side of \Cref{eq:hexagon-abs} takes the same value at $y$ as at $x$.
\end{lemma}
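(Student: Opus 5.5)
The plan is to compute directly how each edge and diagonal maximum transforms under the rotation, and then to read off $\sigma$ family by family. Since $y_i=x_{i+t}$, we have $\mu_i(y)=\max(x_{i+t},x_{i+t+1})=\mu_{i+t}(x)$ and $\delta_v(y)=\max(x_{v-1+t},x_{v+1+t})=\delta_{v+t}(x)$, with all indices taken modulo $6$. Every hexagon gate form is defined by a formula whose indices are read modulo $6$. So substituting these two relations into \Cref{def:hexagon-forms} shifts every index of a form by $t$. The first immediate consequence is $\sum_i\mu_i(y)=\sum_i\mu_{i+t}(x)=\sum_i\mu_i(x)$, because $i\mapsto i+t$ permutes $\{0,\ldots,5\}$. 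Another is $\Max_6(y)=\Max_6(x)$, since $y$ is a permutation of the coordinates of $x$; this settles the left side of \Cref{eq:hexagon-abs}.

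Next I will go through the three families. For the distance-two differences, $\Theta_j(y)=2(\mu_{j+t}-\mu_{j+2+t})(x)=\Theta_{j+t}(x)$, so I set $\sigma(\Theta_j)=\Theta_{j+t}$. For the diagonal gates, the same substitution gives $\Phi_v^\pm(y)=\Phi_{v+t}^\pm(x)$ and $\Psi_v^\pm(y)=\Psi_{v+t}^\pm(x)$, so $\sigma$ shifts $v$ by $t$ and keeps the letter $\Phi$ or $\Psi$ and the superscript. In both families $\sigma$ is a bijection, the sign relating $F(y)$ to $\sigma(F)(x)$ is $+$, and the names $\Theta$, $\Phi^\pm$, $\Psi^\pm$ (hence positivity or negativity) are preserved.

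The pairings are the only delicate step, because their index $k$ lives in $\{0,1,2\}$ while the shift is modulo $6$. Substituting gives $\Pi_k(y)$ equal to the formula for $\Pi$ evaluated at the index $k+t$, read modulo $6$, and likewise for $\Omega_k(y)$ and $\Gamma_k(y)$. I will therefore check the effect of adding $3$ to the index in each formula:
\[
\Pi_{k+3}=-\Pi_k,\qquad\Omega_{k+3}=\Omega_k,\qquad\Gamma_{k+3}=-\Gamma_k.
\]
Each of these follows by using $\mu_{k+6}=\mu_k$ and swapping the two edge pairs. Hence $\sigma$ sends $\Pi_k$ to $\Pi_{k'}$, $\Omega_k$ to $\Omega_{k'}$ and $\Gamma_k$ to $\Gamma_{k'}$, where $k'\equiv k+t$ modulo $3$, with sign factor $\pm1$. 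This map is a bijection on each letter and preserves the letter, hence preserves positivity. The sign of $F(y)$ versus $\sigma(F)(x)$ does not depend on $x$, which is exactly what the statement allows.

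Finally, since $|F(y)|=|\sigma(F)(x)|$ for every form, and $\sigma$ maps each of the sets $\{\Theta_j\}$, $\{\Omega_k\}$, $\{\Pi_k\}$, $\{\Gamma_k\}$, $\{\Phi_v^\pm\}$, $\{\Psi_v^\pm\}$ onto itself, we get $S_\Theta(y)=S_\Theta(x)$, $\sum_kY_k(y)=\sum_kY_k(x)$ and $\sum_vZ_v(y)=\sum_vZ_v(x)$. Together with the invariance of $\sum_i\mu_i$, this shows that the right side of \Cref{eq:hexagon-abs} takes the same value at $y$ and at $x$. I expect the main obstacle to be only the bookkeeping modulo $3$ versus $6$ for the pairings, specifically verifying the three displayed relations carefully. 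Everything else is a direct index shift.
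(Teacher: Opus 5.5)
Your proposal is correct and follows the paper's proof essentially step by step. It uses the same index shift $\mu_i(y)=\mu_{i+t}(x)$, $\delta_v(y)=\delta_{v+t}(x)$, the same relations $\Pi_{k+3}=-\Pi_k$, $\Omega_{k+3}=\Omega_k$, $\Gamma_{k+3}=-\Gamma_k$ to bring pairing indices back to $\{0,1,2\}$, and the same name-preserving bijection $\sigma$. The differences are cosmetic: the paper records $Y_k(y)=Y_{k+t}(x)$ and $Z_v(y)=Z_{v+t}(x)$ before summing, whereas you sum $|F|$ over each letter class directly, which is equally valid. Also, for $\Omega$ no swap of the two pairs occurs; only the reindexing $\mu_{k+7}=\mu_{k+1}$, $\mu_{k+8}=\mu_{k+2}$ is needed.
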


\begin{proof}
From the definitions, $\mu_i(y)=\max(x_{i+t},x_{i+t+1})=\mu_{i+t}(x)$ and $\delta_v(y)=\max(x_{v+t-1},x_{v+t+1})=\delta_{v+t}(x)$.
Every form in \Cref{def:hexagon-forms} is written by one formula in the indices modulo $6$.
For the pairings we let that formula define $\Pi_k$, $\Omega_k$, $\Gamma_k$ and $Y_k$ for every $k$ modulo $6$.
Replacing $k$ by $k+3$ in the formulas gives
\[
\begin{aligned}
\Pi_{k+3}&=\mu_{k+4}+\mu_{k+5}-\mu_{k+1}-\mu_{k+2}=-\Pi_k,\\
\Omega_{k+3}&=\mu_{k+5}+\mu_{k+2}-\mu_{k+4}-\mu_{k+1}=\Omega_k,
\end{aligned}
\]
and $\Gamma_{k+3}=\mu_{k+4}+\mu_{k+2}-\mu_{k+5}-\mu_{k+1}=-\Gamma_k$.
Hence $Y_{k+3}=Y_k$, so $Y_k$ depends only on $k$ modulo $3$.
In each formula every index is the index of the form plus a constant, so substituting $\mu_i(y)=\mu_{i+t}(x)$ and $\delta_v(y)=\delta_{v+t}(x)$ adds $t$ to the index of the form.
For example, $\Phi_v^+(y)=\delta_{v+t}-\mu_{v+t}-\mu_{v+t+1}+\mu_{v+t+3}=\Phi_{v+t}^+(x)$, where the middle expression is evaluated at $x$.
So $\Theta_j(y)=\Theta_{j+t}(x)$, $\Pi_k(y)=\Pi_{k+t}(x)$, $\Omega_k(y)=\Omega_{k+t}(x)$ and $\Gamma_k(y)=\Gamma_{k+t}(x)$, and each of the four diagonal gates at $v$, evaluated at $y$, equals the diagonal gate of the same name at $v+t$, evaluated at $x$.
Taking absolute values gives $Y_k(y)=Y_{k+t}(x)$ and $Z_v(y)=Z_{v+t}(x)$.
As $j$ and $v$ run over all residues modulo $6$, so do $j+t$ and $v+t$, and therefore $S_\Theta(y)=S_\Theta(x)$ and $\sum_vZ_v(y)=\sum_vZ_v(x)$.
As $k$ runs over $0,1,2$, the residues of $k+t$ modulo $3$ are again $0,1,2$ in some order, so $\sum_{k=0}^{2}Y_k(y)=\sum_{k=0}^{2}Y_k(x)$.
By $\Pi_{k+3}=-\Pi_k$, $\Omega_{k+3}=\Omega_k$ and $\Gamma_{k+3}=-\Gamma_k$, the pairings $\Pi_{k+t}$, $\Omega_{k+t}$ and $\Gamma_{k+t}$ are plus or minus $\Pi_{k'}$, $\Omega_{k'}$ and $\Gamma_{k'}$, with signs depending only on $k$ and $t$, where $k'\in\{0,1,2\}$ and $k'\equiv k+t$ modulo $3$.

Let $\sigma$ map $\Theta_j$ to $\Theta_{j+t}$, each diagonal gate at $v$ to the diagonal gate of the same name at $v+t$, and $\Pi_k$, $\Omega_k$, $\Gamma_k$ to $\Pi_{k'}$, $\Omega_{k'}$, $\Gamma_{k'}$.
Since $j\mapsto j+t$ and $v\mapsto v+t$ permute the residues modulo $6$ and $k\mapsto k'$ permutes $\{0,1,2\}$, the map $\sigma$ is a bijection of the $39$ forms.
It keeps the name of each form, so it maps positive gate forms to positive gate forms and negative gate forms to negative gate forms.
The equalities above hold for every $x$, so for each form $F$ either $F(y)=\sigma(F)(x)$ for every $x$ or $F(y)=-\sigma(F)(x)$ for every $x$.
Finally $\sum_i\mu_i(y)=\sum_i\mu_{i+t}(x)=\sum_i\mu_i(x)$, since $i\mapsto i+t$ permutes the residues modulo $6$, and $\Max_6(y)=\Max_6(x)$ because $y$ lists the coordinates of $x$ in another order.
\end{proof}

The second lemma relates the four diagonal gates at a vertex, and lists the three possibilities for the largest coordinate of the triangle around that vertex.

\begin{lemma}[Diagonal gates and triangles]
\label{lem:hexagon-diagonal}
Fix $v\in\{0,\ldots,5\}$ and $x\in\R^6$.
\begin{enumerate}
\item $\Phi_v^+-\Psi_v^+=\mu_{v-1}-\mu_v=\Psi_v^--\Phi_v^-$.
In particular, if $\mu_{v-1}=\mu_v$, then $\Phi_v^+=\Psi_v^+$, $\Phi_v^-=\Psi_v^-$ and $Z_v=0$.
\item At least one of the following holds:
(a) $x_v\ge x_{v-1}$ and $x_v\ge x_{v+1}$, and then $\mu_{v-1}=\mu_v=x_v$;
(b) $x_{v-1}\ge x_v$ and $x_{v-1}\ge x_{v+1}$, and then $\delta_v=\mu_{v-1}=x_{v-1}\ge\mu_v$;
(c) $x_{v+1}\ge x_{v-1}$ and $x_{v+1}\ge x_v$, and then $\delta_v=\mu_v=x_{v+1}\ge\mu_{v-1}$.
\end{enumerate}
\end{lemma}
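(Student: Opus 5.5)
The plan is to prove \Cref{lem:hexagon-diagonal} by direct substitution from \Cref{def:hexagon-forms} for the first part, and by a case split on which vertex of the triangle $\{v-1,v,v+1\}$ carries a largest value for the second part.

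For part~1, subtract the formulas. In $\Phi_v^+-\Psi_v^+$ the terms $\delta_v$, $-\mu_{v+1}$ and $+\mu_{v+3}$ cancel, leaving $-\mu_v+\mu_{v-1}$. In $\Psi_v^--\Phi_v^-$ the terms $\delta_v$, $-\mu_{v-2}$ and $+\mu_{v+2}$ cancel, leaving $-\mu_v+\mu_{v-1}$. This gives both equalities.

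If $\mu_{v-1}=\mu_v$, both differences vanish, so $\Phi_v^+=\Psi_v^+$ and $\Phi_v^-=\Psi_v^-$ as numbers at $x$. Their absolute values then agree, and $Z_v=|\Phi_v^+|-|\Psi_v^+|+|\Phi_v^-|-|\Psi_v^-|=0$.

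For part~2, the three numbers $x_{v-1},x_v,x_{v+1}$ have a largest element, so at least one of the three dominance conditions (a), (b), (c) holds. This covers ties too, since any maximizer works. I then check the consequences in each case.

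\begin{itemize}
\item In case (a), $\mu_{v-1}=\max(x_{v-1},x_v)=x_v$ and $\mu_v=\max(x_v,x_{v+1})=x_v$.
\item In case (b), $\delta_v=\max(x_{v-1},x_{v+1})=x_{v-1}$ and $\mu_{v-1}=\max(x_{v-1},x_v)=x_{v-1}$. Also $\mu_v=\max(x_v,x_{v+1})\le x_{v-1}$, because both entries are at most $x_{v-1}$.
\item Case (c) is the mirror image of case (b), with $v-1$ and $v+1$ exchanged. It gives $\delta_v=\mu_v=x_{v+1}\ge\mu_{v-1}$.
\end{itemize}

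I expect no real obstacle. The only care needed is with the indices: $\mu_{v-1}=m_{v-1,v}$ and $\mu_v=m_{v,v+1}$, and all indices are read modulo $6$ so that the formulas hold for every $v$, including $v=0$ and $v=5$.
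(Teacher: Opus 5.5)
Your proposal is correct and matches the paper's proof essentially step for step. For part~1 you subtract the defining formulas so the shared terms cancel. For part~2 you take a largest of $x_{v-1},x_v,x_{v+1}$, allowing ties, verify cases (a) and (b) directly, and obtain (c) from (b) by exchanging $v-1$ and $v+1$.
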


\begin{proof}
For the first part, subtract the defining formulas.
The forms $\Phi_v^+$ and $\Psi_v^+$ share the terms $\delta_v-\mu_{v+1}+\mu_{v+3}$, so $\Phi_v^+-\Psi_v^+=-\mu_v+\mu_{v-1}$.
The forms $\Psi_v^-$ and $\Phi_v^-$ share the terms $\delta_v-\mu_{v-2}+\mu_{v+2}$, so $\Psi_v^--\Phi_v^-=-\mu_v+\mu_{v-1}$.
If $\mu_{v-1}=\mu_v$, both differences vanish, and the four absolute values in $Z_v$ cancel in pairs.

For the second part, one of the three numbers $x_{v-1},x_v,x_{v+1}$ is at least the other two, so at least one of (a), (b), (c) holds; with ties, several may hold.
In case (a), $\mu_{v-1}=\max(x_{v-1},x_v)=x_v$ and $\mu_v=\max(x_v,x_{v+1})=x_v$.
In case (b), $\delta_v=\max(x_{v-1},x_{v+1})=x_{v-1}$ and $\mu_{v-1}=\max(x_{v-1},x_v)=x_{v-1}$, while $\mu_v=\max(x_v,x_{v+1})\le x_{v-1}$.
Case (c) is case (b) with the roles of $v-1$ and $v+1$ exchanged: $\delta_v=x_{v+1}$, $\mu_v=x_{v+1}$ and $\mu_{v-1}=\max(x_{v-1},x_v)\le x_{v+1}$.
\end{proof}

\begin{proof}[Proof of \Cref{thm:hexagon}]
\emph{Reduction.}
Let $t$ be an index with $x_t=\Max_6(x)$.
By \Cref{lem:hexagon-rotation}, applied with this $t$, \Cref{eq:hexagon-abs} holds at $x$ if and only if it holds at the rotated point $y$, and $y_0=x_t$ is a largest coordinate of $y$.
We may therefore assume $x_0=M$, where $M=\Max_6(x)$; ties among the coordinates are allowed throughout.

\emph{Values at the top vertex.}
Since $x_0=M$ and every coordinate is at most $M$, the two edges at the vertex $0$ satisfy $\mu_0=\max(x_0,x_1)=M$ and $\mu_5=\max(x_5,x_0)=M$.
Likewise $\delta_1=\max(x_0,x_2)=M$ and $\delta_5=\max(x_4,x_0)=M$.
For $i\in\{1,2,3,4\}$ put $p_i=M-\mu_i$.
Each $\mu_i$ is one of the coordinates, so $p_i\ge0$.
It is convenient to write $p_0=p_5=0$ as well, so that $\mu_i=M-p_i$ for all $i$.
The target value of the sum of the three groups of absolute values is
\[
\begin{aligned}
12M-2\sum_{i=0}^{5}\mu_i&=12M-2\left(6M-\left(p_1+p_2+p_3+p_4\right)\right)\\
&=2\left(p_1+p_2+p_3+p_4\right).
\end{aligned}
\]
Thus \Cref{eq:hexagon-abs} is equivalent to
\begin{equation}\label{eq:hexagon-target}
S_\Theta+\sum_{k=0}^{2}Y_k+\sum_{v=0}^{5}Z_v=2\left(p_1+p_2+p_3+p_4\right).
\end{equation}
If a form has coefficient sum zero in the edges alone, then substituting $\mu_i=M-p_i$ cancels $M$ and replaces each $\mu_i$ by $-p_i$.
This applies to $\Theta_j$, $\Pi_k$, $\Omega_k$ and $\Gamma_k$.

\emph{The distance-two differences.}
The six forms are $\Theta_j=2(p_{j+2}-p_j)$, with $p_0=p_5=0$.
For $j=0,1,2,3,4,5$ they are $2p_2$, $2(p_3-p_1)$, $2(p_4-p_2)$, $-2p_3$, $-2p_4$ and $2p_1$.
Since every $p_i\ge0$, the absolute values are $2p_2$, $2\left|p_1-p_3\right|$, $2\left|p_2-p_4\right|$, $2p_3$, $2p_4$ and $2p_1$.
Therefore
\begin{equation}\label{eq:hexagon-theta}
S_\Theta=2\left(p_1+p_2+p_3+p_4\right)+2\left|p_1-p_3\right|+2\left|p_2-p_4\right|.
\end{equation}

\emph{The pairings.}
For $k=0$ the forms are $\Pi_0=\mu_1+\mu_2-\mu_4-\mu_5$, $\Omega_0=\mu_2+\mu_5-\mu_1-\mu_4$ and $\Gamma_0=\mu_1+\mu_5-\mu_2-\mu_4$.
Replacing each $\mu_i$ by $-p_i$ and using $p_5=0$ gives $\Pi_0=p_4-p_1-p_2$, $\Omega_0=p_1+p_4-p_2$ and $\Gamma_0=p_2+p_4-p_1$.
Hence
\[
Y_0=\left|p_1+p_4-p_2\right|-\left|p_1+p_2-p_4\right|-\left|p_2+p_4-p_1\right|.
\]
For $k=1$ the forms are $\Pi_1=\mu_2+\mu_3-\mu_5-\mu_0$, $\Omega_1=\mu_3+\mu_0-\mu_2-\mu_5$ and $\Gamma_1=\mu_2+\mu_0-\mu_3-\mu_5$.
With $p_0=p_5=0$ they become $\Pi_1=-(p_2+p_3)$, $\Omega_1=p_2-p_3$ and $\Gamma_1=p_3-p_2$.
Since $p_2,p_3\ge0$, we have $\left|\Pi_1\right|=p_2+p_3$, and $\left|\Omega_1\right|=\left|\Gamma_1\right|$.
Hence
\[
Y_1=-(p_2+p_3).
\]
For $k=2$ the forms are $\Pi_2=\mu_3+\mu_4-\mu_0-\mu_1$, $\Omega_2=\mu_4+\mu_1-\mu_3-\mu_0$ and $\Gamma_2=\mu_3+\mu_1-\mu_4-\mu_0$.
With $p_0=0$ they become $\Pi_2=p_1-p_3-p_4$, $\Omega_2=p_3-p_1-p_4$ and $\Gamma_2=p_4-p_1-p_3$.
Hence
\[
Y_2=\left|p_1+p_4-p_3\right|-\left|p_1+p_3-p_4\right|-\left|p_3+p_4-p_1\right|.
\]

\emph{The diagonal gates at the vertices $0$, $1$ and $5$.}
At $v=0$ the two edges are $\mu_5=\mu_0=M$, so $Z_0=0$ by \Cref{lem:hexagon-diagonal}.
At $v=1$ we use $\delta_1=M$ and $\mu_i=M-p_i$:
\[
\begin{aligned}
\Phi_1^+&=\delta_1-\mu_1-\mu_2+\mu_4=p_1+p_2-p_4,\\
\Phi_1^-&=\delta_1-\mu_0-\mu_5+\mu_3=-p_3,\\
\Psi_1^+&=\delta_1-\mu_0-\mu_2+\mu_4=p_2-p_4,\\
\Psi_1^-&=\delta_1-\mu_1-\mu_5+\mu_3=p_1-p_3.
\end{aligned}
\]
Since $p_3\ge0$, $\left|\Phi_1^-\right|=p_3$, and therefore
\[
Z_1=\left|p_1+p_2-p_4\right|+p_3-\left|p_2-p_4\right|-\left|p_1-p_3\right|.
\]
At $v=5$ we use $\delta_5=M$ in the same way:
\[
\begin{aligned}
\Phi_5^+&=\delta_5-\mu_5-\mu_0+\mu_2=-p_2,\\
\Phi_5^-&=\delta_5-\mu_4-\mu_3+\mu_1=p_3+p_4-p_1,\\
\Psi_5^+&=\delta_5-\mu_4-\mu_0+\mu_2=p_4-p_2,\\
\Psi_5^-&=\delta_5-\mu_5-\mu_3+\mu_1=p_3-p_1.
\end{aligned}
\]
Since $p_2\ge0$, $\left|\Phi_5^+\right|=p_2$, and therefore
\[
Z_5=\left|p_3+p_4-p_1\right|+p_2-\left|p_2-p_4\right|-\left|p_1-p_3\right|.
\]

\emph{The diagonal gates at the vertex $2$.}
We show
\begin{equation}\label{eq:hexagon-z2}
Z_2=\left|p_2+p_4-p_1\right|-\left|p_1+p_4-p_2\right|.
\end{equation}
With $\mu_0=\mu_5=M$ the four forms are
\[
\begin{aligned}
\Phi_2^+&=\delta_2-\mu_2-\mu_3+M, &\qquad \Phi_2^-&=\delta_2-\mu_1-M+\mu_4,\\
\Psi_2^+&=\delta_2-\mu_1-\mu_3+M, &\qquad \Psi_2^-&=\delta_2-\mu_2-M+\mu_4.
\end{aligned}
\]
We apply \Cref{lem:hexagon-diagonal} to the triangle $x_1,x_2,x_3$ and treat its three cases.
If $x_2$ is largest among $x_1,x_2,x_3$, then $\mu_1=\mu_2$, so $Z_2=0$.
Also $p_1=p_2$ in this case, so the right side of \Cref{eq:hexagon-z2} is $\left|p_4\right|-\left|p_4\right|=0$.
If $x_1$ is largest, then $\delta_2=\mu_1=x_1$ and $\mu_1\ge\mu_2$, that is, $p_2\ge p_1$.
Substituting $\delta_2=\mu_1=M-p_1$ gives
\[
\Phi_2^+=p_2-p_1+p_3,\qquad \Phi_2^-=-p_4,\qquad \Psi_2^+=p_3,\qquad \Psi_2^-=p_2-p_1-p_4.
\]
Here $\Phi_2^+\ge0$ and $\Psi_2^+\ge0$ because $p_2\ge p_1$ and $p_3\ge0$, and $\left|\Phi_2^-\right|=p_4$ because $p_4\ge0$.
So
\[
\begin{aligned}
Z_2&={(p_2-p_1+p_3)}+p_4-p_3-\left|p_2-p_1-p_4\right|\\
&={(p_2+p_4-p_1)}-\left|p_1+p_4-p_2\right|.
\end{aligned}
\]
Since $p_2\ge p_1$ and $p_4\ge0$, the number $p_2+p_4-p_1$ is nonnegative and equals its absolute value, which gives \Cref{eq:hexagon-z2}.
If $x_3$ is largest, then $\delta_2=\mu_2=x_3$ and $\mu_2\ge\mu_1$, that is, $p_1\ge p_2$.
Substituting $\delta_2=\mu_2=M-p_2$ gives
\[
\Phi_2^+=p_3,\qquad \Phi_2^-=p_1-p_2-p_4,\qquad \Psi_2^+=p_1-p_2+p_3,\qquad \Psi_2^-=-p_4.
\]
Here $\Phi_2^+\ge0$ and $\Psi_2^+\ge0$ because $p_1\ge p_2$ and $p_3\ge0$, and $\left|\Psi_2^-\right|=p_4$.
So $Z_2=p_3+\left|p_1-p_2-p_4\right|-{(p_1-p_2+p_3)}-p_4=\left|p_2+p_4-p_1\right|-{(p_1+p_4-p_2)}$.
Since $p_1\ge p_2$ and $p_4\ge0$, the number $p_1+p_4-p_2$ is nonnegative and equals its absolute value, which again gives \Cref{eq:hexagon-z2}.

\emph{The diagonal gates at the vertex $4$.}
We show
\begin{equation}\label{eq:hexagon-z4}
Z_4=\left|p_1+p_3-p_4\right|-\left|p_1+p_4-p_3\right|.
\end{equation}
With $\mu_0=\mu_5=M$ the four forms are
\[
\begin{aligned}
\Phi_4^+&=\delta_4-\mu_4-M+\mu_1, &\qquad \Phi_4^-&=\delta_4-\mu_3-\mu_2+M,\\
\Psi_4^+&=\delta_4-\mu_3-M+\mu_1, &\qquad \Psi_4^-&=\delta_4-\mu_4-\mu_2+M.
\end{aligned}
\]
We apply \Cref{lem:hexagon-diagonal} to the triangle formed by $x_3,x_4,x_5$.
If $x_4$ is largest among $x_3,x_4,x_5$, then $\mu_3=\mu_4$, so $Z_4=0$.
Also $p_3=p_4$ in this case, so the right side of \Cref{eq:hexagon-z4} is $\left|p_1\right|-\left|p_1\right|=0$.
If $x_5$ is largest, then $\delta_4=\mu_4=x_5$ and $\mu_4\ge\mu_3$, that is, $p_3\ge p_4$.
Substituting $\delta_4=\mu_4=M-p_4$ gives
\[
\Phi_4^+=-p_1,\qquad \Phi_4^-=p_3-p_4+p_2,\qquad \Psi_4^+=p_3-p_4-p_1,\qquad \Psi_4^-=p_2.
\]
Here $\left|\Phi_4^+\right|=p_1$, and $\Phi_4^-\ge0$ and $\Psi_4^-\ge0$ because $p_3\ge p_4$ and $p_2\ge0$.
So $Z_4=p_1+{(p_3-p_4+p_2)}-\left|p_3-p_4-p_1\right|-p_2={(p_1+p_3-p_4)}-\left|p_1+p_4-p_3\right|$.
Since $p_3\ge p_4$ and $p_1\ge0$, the number $p_1+p_3-p_4$ is nonnegative, which gives \Cref{eq:hexagon-z4}.
If $x_3$ is largest, then $\delta_4=\mu_3=x_3$ and $\mu_3\ge\mu_4$, that is, $p_4\ge p_3$.
Substituting $\delta_4=\mu_3=M-p_3$ gives
\[
\Phi_4^+=p_4-p_3-p_1,\qquad \Phi_4^-=p_2,\qquad \Psi_4^+=-p_1,\qquad \Psi_4^-=p_4-p_3+p_2.
\]
Here $\Phi_4^-\ge0$, $\left|\Psi_4^+\right|=p_1$, and $\Psi_4^-\ge0$ because $p_4\ge p_3$ and $p_2\ge0$.
So
\[
\begin{aligned}
Z_4&=\left|p_4-p_3-p_1\right|+p_2-p_1-{(p_4-p_3+p_2)}\\
&=\left|p_1+p_3-p_4\right|-{(p_1+p_4-p_3)}.
\end{aligned}
\]
Since $p_4\ge p_3$ and $p_1\ge0$, the number $p_1+p_4-p_3$ is nonnegative, which gives \Cref{eq:hexagon-z4}.

\emph{The diagonal gates at the vertex $3$.}
We show $Z_3=0$.
With $\mu_0=\mu_5=M$ the four forms are
\[
\begin{aligned}
\Phi_3^+&=\delta_3-\mu_3-\mu_4+M, &\qquad \Phi_3^-&=\delta_3-\mu_2-\mu_1+M,\\
\Psi_3^+&=\delta_3-\mu_2-\mu_4+M, &\qquad \Psi_3^-&=\delta_3-\mu_3-\mu_1+M.
\end{aligned}
\]
We apply \Cref{lem:hexagon-diagonal} to the triangle formed by $x_2,x_3,x_4$.
If $x_3$ is largest among $x_2,x_3,x_4$, then $\mu_2=\mu_3$ and $Z_3=0$.
If $x_2$ is largest, then $\delta_3=\mu_2=x_2$ and $\mu_2\ge\mu_3$.
The four forms become
\[
\Phi_3^+=(\mu_2-\mu_3)+p_4,\qquad \Phi_3^-=p_1,\qquad \Psi_3^+=p_4,\qquad \Psi_3^-=(\mu_2-\mu_3)+p_1,
\]
and all four are nonnegative because $\mu_2\ge\mu_3$ and $p_1,p_4\ge0$.
Hence $Z_3=\Phi_3^++\Phi_3^--\Psi_3^+-\Psi_3^-=0$.
If $x_4$ is largest, then $\delta_3=\mu_3=x_4$ and $\mu_3\ge\mu_2$.
The four forms become
\[
\Phi_3^+=p_4,\qquad \Phi_3^-=(\mu_3-\mu_2)+p_1,\qquad \Psi_3^+=(\mu_3-\mu_2)+p_4,\qquad \Psi_3^-=p_1,
\]
again all nonnegative, and again $Z_3=\Phi_3^++\Phi_3^--\Psi_3^+-\Psi_3^-=0$.

\emph{Cancellation.}
Three pairs of aggregates now combine: $Y_0$ with $Z_2$, then $Y_2$ with $Z_4$, and finally $Z_1$ with $Z_5$.
First add \Cref{eq:hexagon-z2} to the value of $Y_0$.
The terms $\left|p_1+p_4-p_2\right|$ and $\left|p_2+p_4-p_1\right|$ cancel, and
\[
Y_0+Z_2=-\left|p_1+p_2-p_4\right|.
\]
In the same way, add \Cref{eq:hexagon-z4} to the value of $Y_2$.
The terms $\left|p_1+p_4-p_3\right|$ and $\left|p_1+p_3-p_4\right|$ cancel, and
\[
Y_2+Z_4=-\left|p_3+p_4-p_1\right|.
\]
Adding the values of $Z_1$ and $Z_5$ gives
\[
Z_1+Z_5=\left|p_1+p_2-p_4\right|+\left|p_3+p_4-p_1\right|+p_2+p_3-2\left|p_2-p_4\right|-2\left|p_1-p_3\right|.
\]
Together with $Y_1=-(p_2+p_3)$ and $Z_0=Z_3=0$, the sum of the aggregates is
\[
\sum_{k=0}^{2}Y_k+\sum_{v=0}^{5}Z_v=-\left|p_1+p_2-p_4\right|-\left|p_3+p_4-p_1\right|+\left(Z_1+Z_5\right)-\left(p_2+p_3\right).
\]
Substituting the value of $Z_1+Z_5$, the remaining terms cancel and
\[
\sum_{k=0}^{2}Y_k+\sum_{v=0}^{5}Z_v=-2\left|p_1-p_3\right|-2\left|p_2-p_4\right|.
\]
Adding \Cref{eq:hexagon-theta} gives exactly the right side of \Cref{eq:hexagon-target}.
This proves \Cref{eq:hexagon-abs} whenever $x_0$ is a largest coordinate, and hence for every $x\in\R^6$ by the reduction.

\emph{The ReLU form.}
Write $\mathcal F^+$ and $\mathcal F^-$ for the lists of positive and negative gate forms.
By definition the right side of \Cref{eq:hexagon-abs} is $2\sum_i\mu_i+\sum_{F\in\mathcal F^+}\left|F\right|-\sum_{F\in\mathcal F^-}\left|F\right|$.
Substitute $\left|t\right|=2\ReLU(t)-t$ into every absolute value.
The linear terms that this produces are $-\Lambda$, where
\[
\begin{split}
\Lambda&=\sum_{F\in\mathcal F^+}F-\sum_{F\in\mathcal F^-}F\\
&=\sum_{j=0}^{5}\Theta_j+\sum_{k=0}^{2}\left(\Omega_k-\Pi_k-\Gamma_k\right)+\sum_{v=0}^{5}\left(\Phi_v^++\Phi_v^--\Psi_v^+-\Psi_v^-\right).
\end{split}
\]
We compute the three sums.
First, $\sum_j\Theta_j=2\sum_j\mu_j-2\sum_j\mu_{j+2}=0$.
Second, by the formulas in \Cref{def:hexagon-forms},
\[
\Omega_k-\Pi_k-\Gamma_k=\mu_{k+2}+\mu_{k+4}+\mu_{k+5}-3\mu_{k+1},
\]
since the edge $\mu_{k+1}$ enters each of $\Omega_k$, $-\Pi_k$ and $-\Gamma_k$ with the coefficient $-1$.
Each of the other three edges enters two of these forms with $+1$ and one with $-1$.
Summing over $k=0,1,2$, each of the edges $\mu_1$, $\mu_2$ and $\mu_3$ occurs once with $-3$ and once with $+1$.
Each of the edges $\mu_0$, $\mu_4$ and $\mu_5$ occurs twice with $+1$: the edge $\mu_0$ for $k=1,2$, the edge $\mu_4$ for $k=0,2$, and the edge $\mu_5$ for $k=0,1$.
Hence
\[
\sum_{k=0}^{2}\left(\Omega_k-\Pi_k-\Gamma_k\right)=2\mu_0-2\mu_1-2\mu_2-2\mu_3+2\mu_4+2\mu_5.
\]
Third, by the first part of \Cref{lem:hexagon-diagonal}, $(\Phi_v^+-\Psi_v^+)+(\Phi_v^--\Psi_v^-)={(\mu_{v-1}-\mu_v)}-{(\mu_{v-1}-\mu_v)}=0$ for every $v$.
Therefore $\Lambda=2(\mu_0-\mu_1-\mu_2-\mu_3+\mu_4+\mu_5)$, and \Cref{eq:hexagon-abs} becomes
\[
12\Max_6=2\sum_{i=0}^{5}\mu_i-\Lambda+2\sum_{F\in\mathcal F^+}\ReLU(F)-2\sum_{F\in\mathcal F^-}\ReLU(F).
\]
Here $2\sum_i\mu_i-\Lambda=4(\mu_1+\mu_2+\mu_3)$.
Dividing by $12$ gives \Cref{eq:hexagon-relu}.
Each step of this rewriting is an equality between functions of $x$ and can be reversed, so the two displayed identities are equivalent.
\end{proof}

The reflections of the hexagon also map the right side of \Cref{eq:hexagon-abs} term by term onto itself, although the proof above uses rotations only.

\begin{remark}[Reflections]
\label{rem:hexagon-reflection}
Define $y\in\R^6$ by $y_i=x_{-i}$.
Then $\mu_i(y)=\mu_{-i-1}(x)$ and $\delta_v(y)=\delta_{-v}(x)$.
We substitute these values into \Cref{def:hexagon-forms}, with $\Pi_k$, $\Omega_k$ and $\Gamma_k$ defined for every $k$ modulo $6$ as in the proof of \Cref{lem:hexagon-rotation}.
For the distance-two differences, $\Theta_j(y)=2(\mu_{-j-1}-\mu_{-j-3})$ at $x$, which is $-\Theta_{-j-3}(x)$.
For the pairings, put $l=-k-1$.
Then $\Pi_k(y)=\mu_{-k-2}+\mu_{-k-3}-\mu_{-k-5}-\mu_{-k-6}$ at $x$, and the four indices are $l+5$, $l+4$, $l+2$ and $l+1$.
Hence $\Pi_k(y)=-\Pi_l(x)$.
Since $\mu_{k+1}$, $\mu_{k+2}$, $\mu_{k+4}$ and $\mu_{k+5}$ at $y$ are $\mu_{l+5}$, $\mu_{l+4}$, $\mu_{l+2}$ and $\mu_{l+1}$ at $x$, the same substitution gives
\[
\begin{aligned}
\Omega_k(y)&=\mu_{l+4}+\mu_{l+1}-\mu_{l+5}-\mu_{l+2}=-\Omega_l(x),\\
\Gamma_k(y)&=\mu_{l+5}+\mu_{l+1}-\mu_{l+4}-\mu_{l+2}=\Gamma_l(x).
\end{aligned}
\]
For the diagonal gates, put $w=-v$.
Then $\delta_v(y)=\delta_w(x)$ and $\mu_{v+c}(y)=\mu_{w-c-1}(x)$ for every $c$, so
\[
\begin{aligned}
\Phi_v^+(y)&=\delta_w-\mu_{w-1}-\mu_{w-2}+\mu_{w+2}=\Phi_w^-(x),\\
\Psi_v^+(y)&=\delta_w-\mu_w-\mu_{w-2}+\mu_{w+2}=\Psi_w^-(x),\\
\Phi_v^-(y)&=\delta_w-\mu_w-\mu_{w+1}+\mu_{w+3}=\Phi_w^+(x),\\
\Psi_v^-(y)&=\delta_w-\mu_{w-1}-\mu_{w+1}+\mu_{w+3}=\Psi_w^+(x),
\end{aligned}
\]
where the middle expressions are evaluated at $x$.

As $k$ runs over $0,1,2$, the index $l$ runs over $5,4,3$, and $\Pi_l=-\Pi_{l-3}$, $\Omega_l=\Omega_{l-3}$ and $\Gamma_l=-\Gamma_{l-3}$ with $l-3=2-k$.
So $\Pi_k(y)=\Pi_{2-k}(x)$, $\Omega_k(y)=-\Omega_{2-k}(x)$ and $\Gamma_k(y)=-\Gamma_{2-k}(x)$.
The maps $j\mapsto-j-3$ and $v\mapsto-v$ on residues modulo $6$ and $k\mapsto2-k$ on $\{0,1,2\}$ are bijections.
Let $\sigma$ map $\Theta_j$ to $\Theta_{-j-3}$, the pairings $\Pi_k$, $\Omega_k$, $\Gamma_k$ to $\Pi_{2-k}$, $\Omega_{2-k}$, $\Gamma_{2-k}$, and $\Phi_v^\pm$, $\Psi_v^\pm$ to $\Phi_{-v}^\mp$, $\Psi_{-v}^\mp$.
By these three bijections, $\sigma$ is a bijection of the $39$ forms.
It maps positive gate forms to positive gate forms and negative gate forms to negative gate forms, since $\Phi_v^+$ and $\Phi_v^-$ are positive and $\Psi_v^+$ and $\Psi_v^-$ are negative.
By the equalities above, for each form $F$ either $F(y)=\sigma(F)(x)$ for every $x$ or $F(y)=-\sigma(F)(x)$ for every $x$.
Also $\sum_i\mu_i(y)=\sum_i\mu_{-i-1}(x)=\sum_i\mu_i(x)$, since $i\mapsto-i-1$ permutes the residues modulo $6$.
So the reflection has the properties of $\sigma$ and of $\sum_i\mu_i$ stated in \Cref{lem:hexagon-rotation}.

Now let $y$ be the image of $x$ under any reflection of the hexagon, so that $y_i=x_{t-i}$ for some $t\in\{0,\ldots,5\}$.
This reflection is the one treated above, which maps $x$ to $z$ with $z_i=x_{-i}$, followed by a rotation.
Indeed, $y_i=x_{t-i}=z_{i-t}=z_{i+t'}$, where $t'\in\{0,\ldots,5\}$ and $t'\equiv-t$ modulo $6$, and this is the rotation of \Cref{lem:hexagon-rotation} with the shift $t'$, applied to $z$.

These properties survive composition.
If $x\mapsto z$ has them with a bijection $\sigma_1$ and $z\mapsto y$ with a bijection $\sigma_2$, then for each form $F$ we have $F(y)=\pm\sigma_2(F)(z)$ and $\sigma_2(F)(z)=\pm\sigma_1(\sigma_2(F))(x)$, each sign the same for every $x$, so $F(y)=\pm\sigma_1(\sigma_2(F))(x)$ with the product of the two signs.
The composite $\sigma_1\circ\sigma_2$ is a bijection that keeps the positive and the negative gate forms, and $\sum_i\mu_i(y)=\sum_i\mu_i(z)=\sum_i\mu_i(x)$.
Hence all twelve symmetries of the hexagon have these properties.

Each of them maps the right side of \Cref{eq:hexagon-abs} term by term onto itself, in the sense that the absolute values $\left|F\right|$ of the positive gate forms are permuted among themselves, likewise for the negative ones, and $\sum_i\mu_i$ keeps its value.
By the definitions of $S_\Theta$, $Y_k$ and $Z_v$, the right side is $2\sum_i\mu_i$ plus the sum of $\left|F\right|$ over the positive gate forms $F$ minus the sum of $\left|F\right|$ over the negative ones.
Now let $y$ be the image of $x$ under any one of the twelve symmetries, and let $\sigma$ be its bijection of the forms.
At $y$ the term $\left|F(y)\right|$ equals the term $\left|\sigma(F)(x)\right|$ at $x$, and as $F$ runs over the positive gate forms, so does $\sigma(F)$, and likewise for the negative ones.
So $\sum_{F\text{ positive}}\left|F(y)\right|=\sum_{F\text{ positive}}\left|\sigma(F)(x)\right|=\sum_{F\text{ positive}}\left|F(x)\right|$, the same holds for the negative gate forms, and $\sum_i\mu_i(y)=\sum_i\mu_i(x)$.
\end{remark}

\section{A Network with Widths Seventeen and Forty-One}
\label{sec:network}

\Cref{eq:hexagon-relu} uses only the twelve pairwise maxima $\mu_i$ and $\delta_v$.
We now turn the identity into a fully specified network, including its gate counts and weights.
The first layer below has seventeen gates, and each of the twelve maxima is a linear form in their outputs.
For an ordered pair $(a,b)$ write $r_{ab}(x)=\ReLU(x_a-x_b)$ for the oriented comparison gate from $a$ to $b$.
Let $\mathcal E$ be the list of the $15$ ordered pairs
\[
\begin{aligned}
&(0,1),\quad(0,2),\quad(0,4),\quad(0,5),\quad(3,1),\quad(3,2),\quad(3,4),\quad(3,5),\quad(4,5),\\
&(1,2),\quad(2,1),\quad(1,5),\quad(5,1),\quad(2,4),\quad(4,2).
\end{aligned}
\]
In words, each of the twelve pairs $\{a,b\}$ of a hexagon edge or short diagonal has at least one oriented comparison gate.
The eight pairs containing $0$ or $3$ are oriented away from $0$ or $3$, and the pair $\{4,5\}$ is oriented from $4$ to $5$.
The three pairs $\{1,2\}$, $\{1,5\}$ and $\{2,4\}$ carry both orientations; we call them the doubled pairs.
The first hidden layer has the $15$ outputs $r_{ab}$ for $(a,b)\in\mathcal E$, together with $q_+(x)=\ReLU(x_1)$ and $q_-(x)=\ReLU(-x_1)$.
Write $\widehat R(x)$ for this $17$-component vector.

For each of the twelve pairs choose one oriented gate: the unique one where there is only one, and $(1,2)$, $(1,5)$, $(2,4)$ for the doubled pairs.
In every case the chosen gate $r_{ab}$ has its second index $b$ in the set $\{1,2,4,5\}$, and
\begin{equation}\label{eq:hexagon-low}
m_{ab}=\max(x_a,x_b)=x_b+\max(x_a-x_b,0)=x_b+r_{ab}.
\end{equation}
The coordinates $x_1,x_2,x_4,x_5$ are linear combinations of the outputs.
Indeed, put $w_{ab}=x_a-x_b=r_{ab}-r_{ba}$ for each doubled pair.
Then
\begin{equation}\label{eq:hexagon-coords}
\begin{split}
&x_1=q_+-q_-,\qquad x_2=x_1-w_{12},\qquad x_5=x_1-w_{15},\\
&x_4=x_2-w_{24}=x_1-w_{12}-w_{24}.
\end{split}
\end{equation}
We need not reconstruct $x_0$ or $x_3$ from the first-layer outputs.

\begin{lemma}[Linear parts]
\label{lem:hexagon-linear}
Let $F=\sum c_{ab}m_{ab}$ be a combination of the twelve maxima $\mu_i,\delta_v$ with rational coefficients $c_{ab}$.
Here each pair is written as $ab$ in the orientation of its chosen gate.
Let $\kappa_\ell$ be the sum of the $c_{ab}$ over the pairs whose chosen gate has second index $\ell$.
Then
\[
F=\sum c_{ab}r_{ab}+\left(\kappa_1+\kappa_2+\kappa_4+\kappa_5\right)x_1-\left(\kappa_2+\kappa_4\right)w_{12}-\kappa_4w_{24}-\kappa_5w_{15},
\]
where each $r_{ab}$ is the chosen gate of its pair.
In particular, if the coefficients $c_{ab}$ sum to zero, then $F$ equals $\sum c_{ab}r_{ab}$ plus a combination of $w_{12}$, $w_{24}$ and $w_{15}$ with coefficients built from the $\kappa_\ell$.
\end{lemma}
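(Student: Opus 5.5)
The plan is direct substitution. By \Cref{eq:hexagon-low}, every one of the twelve maxima appearing in $F$ equals $x_b+r_{ab}$, where $r_{ab}$ is the chosen gate of its pair and $b\in\{1,2,4,5\}$. Summing with the coefficients $c_{ab}$ gives
\[
F=\sum c_{ab}r_{ab}+\sum c_{ab}x_b=\sum c_{ab}r_{ab}+\kappa_1x_1+\kappa_2x_2+\kappa_4x_4+\kappa_5x_5,
\]
by grouping the pairs according to the second index of the chosen gate. This is the definition of $\kappa_\ell$.

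Next we substitute the expressions of \Cref{eq:hexagon-coords}: $x_2=x_1-w_{12}$, $x_5=x_1-w_{15}$ and $x_4=x_1-w_{12}-w_{24}$. We then collect coefficients.
\begin{itemize}
\item The coefficient of $x_1$ is $\kappa_1+\kappa_2+\kappa_4+\kappa_5$.
\item $w_{12}$ appears with $-\kappa_2$ from $x_2$ and $-\kappa_4$ from $x_4$, giving $-(\kappa_2+\kappa_4)$.
\item $w_{24}$ appears only through $x_4$, with $-\kappa_4$.
\item $w_{15}$ appears only through $x_5$, with $-\kappa_5$.
\end{itemize}
This is exactly the displayed formula. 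Each identity used holds for every $x\in\R^6$: \Cref{eq:hexagon-low} because $\max(s,0)$ is $\ReLU$, and \Cref{eq:hexagon-coords} because $r_{ab}-r_{ba}=x_a-x_b$ and $q_+-q_-=x_1$.

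For the final claim, note that every pair is assigned to exactly one chosen gate, and the second index of that gate lies in $\{1,2,4,5\}$. Hence $\kappa_1+\kappa_2+\kappa_4+\kappa_5=\sum c_{ab}$. When the coefficients sum to zero, the $x_1$ term vanishes, and what remains is $\sum c_{ab}r_{ab}$ plus the stated combination of $w_{12},w_{24},w_{15}$.

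The only point needing care, and hence the main obstacle, is checking that every chosen gate really has its second index in $\{1,2,4,5\}$. This is a finite inspection of the list $\mathcal E$ together with the choice $(1,2),(1,5),(2,4)$ for the doubled pairs. We also need that the twelve chosen pairs are exactly the edges and short diagonals $\{0,1\},\{1,2\},\{2,3\},\{3,4\},\{4,5\},\{0,5\},\{1,5\},\{0,2\},\{1,3\},\{2,4\},\{3,5\},\{0,4\}$, so that $F$ is well defined in terms of them. Both checks are by reading off the list.
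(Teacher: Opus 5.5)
Your proposal is correct and follows essentially the same route as the paper: apply \Cref{eq:hexagon-low} to write $F=\sum c_{ab}r_{ab}+\sum_\ell\kappa_\ell x_\ell$, substitute \Cref{eq:hexagon-coords}, collect coefficients, and note that $\kappa_1+\kappa_2+\kappa_4+\kappa_5=\sum c_{ab}$. Your additional check that every chosen gate has its second index in $\{1,2,4,5\}$ and that the twelve chosen pairs are exactly the edges and short diagonals is a harmless explicit verification that the paper leaves to the reader.
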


\begin{proof}
By \Cref{eq:hexagon-low}, $F=\sum c_{ab}r_{ab}+\kappa_1x_1+\kappa_2x_2+\kappa_4x_4+\kappa_5x_5$.
Substituting \Cref{eq:hexagon-coords} gives
\[
\kappa_1x_1+\kappa_2\left(x_1-w_{12}\right)+\kappa_4\left(x_1-w_{12}-w_{24}\right)+\kappa_5\left(x_1-w_{15}\right),
\]
which is the stated expression.
The sum $\kappa_1+\kappa_2+\kappa_4+\kappa_5$ is the sum of all coefficients $c_{ab}$, which gives the last claim.\qedhere
\end{proof}

By this lemma, every linear combination of the twelve maxima, and in particular every argument of a ReLU in \Cref{eq:hexagon-relu}, is a linear form in the seventeen outputs.

\begin{corollary}[restatement of \Cref{icor:hexagon-network}]
\label{cor:hexagon-network}
There is a zero-bias ReLU network with architecture $6\to17\to41\to1$ computing $\Max_6$ on all of $\R^6$.
Every weight is an integer multiple of $1/6$.
\end{corollary}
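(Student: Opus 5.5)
The plan is to build the network directly from \Cref{eq:hexagon-relu}, with the first layer $\widehat R$ of seventeen gates described above. First layer: take $W_1$ to have the rows $e_a-e_b$ for $(a,b)\in\mathcal E$, together with $e_1$ and $-e_1$. Then $\ReLU(W_1x)=\widehat R(x)$, there is no bias, and every entry lies in $\{0,\pm1\}$.

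Second layer: the $39$ gate forms of \Cref{def:hexagon-forms} supply $39$ rows. By \Cref{lem:hexagon-linear}, each form $F$ equals a linear combination of the seventeen outputs, $F(x)=\ell_F(\widehat R(x))$. The coefficients of the forms sum to zero, so the $x_1$ term, and hence $q_\pm$, drops out. Each remaining weight is a $c_{ab}$ or a combination of the $\kappa_\ell$, all integers. Every gate form involves only edges and short diagonals, which all have chosen gates, so the lemma applies. This accounts for $39$ gates $\ReLU(\ell_F(\widehat R))$.

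The linear part $\frac13(\mu_1+\mu_2+\mu_3)$ of \Cref{eq:hexagon-relu} has coefficient sum $1\neq0$. By \Cref{lem:hexagon-linear} it is still a linear form $L$ in the seventeen outputs, now with a nonzero multiple of $x_1=q_+-q_-$. It can be passed to the output with the two further gates $\ReLU(L)$ and $\ReLU(-L)$, using $t=\ReLU(t)-\ReLU(-t)$. That gives $39+2=41$ gates, and $W_2$ consists of the rows $\ell_F$ together with $\pm L$.

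To keep the weights in $\tfrac16\Z$, I would instead pass $L'=\mu_1+\mu_2+\mu_3$, which has integer weights, and place the factor $\tfrac13$ in $W_3$. So $W_3$ has entries $\pm\tfrac16$ on the gate forms, according to positive or negative, and $\tfrac13,-\tfrac13$ on the two pass-through gates. Then all weights lie in $\tfrac16\Z$, all biases are zero, and correctness on all of $\R^6$ is exactly \Cref{eq:hexagon-relu} from \Cref{thm:hexagon}.

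The main thing to check is the bookkeeping for $L'$ under \Cref{lem:hexagon-linear}, which should be routine. The chosen gates are $r_{12}$, $r_{23}$ (that is, the gate $(3,2)$) and $r_{34}$ (the gate $(3,4)$), with second indices $2$, $2$ and $4$. So $\kappa_2=2$ and $\kappa_4=1$, giving
\[
L'=r_{12}+r_{32}+r_{34}+3x_1-3w_{12}-w_{24},
\]
where $x_1=q_+-q_-$ and each $w$ is a difference of two $r$'s. All of these coefficients are integers.

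I would also note why exactly seventeen first-layer gates are needed: $q_\pm$ are required only for $L'$, and the doubled pairs are required only to reconstruct $x_2,x_4,x_5$ through \Cref{eq:hexagon-coords}.
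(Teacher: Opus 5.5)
Your construction is correct and is essentially the paper's: the same first layer $\widehat R$, the same $39$ integer-weight rows obtained from \Cref{lem:hexagon-linear}, and two pass-through gates for the linear part. The only difference is that you pass $\mu_1+\mu_2+\mu_3$ and put $\pm\frac13$ in the readout, and your computation $L'=r_{12}+r_{32}+r_{34}+3x_1-3w_{12}-w_{24}$ is right. The paper instead passes $\frac13(\mu_1+\mu_2+\mu_3)$, with second-layer weights in $\frac13\Z$ and readout $\pm1$, so your variant even keeps $W_2$ integral.

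Your closing remark should only explain the role of each gate in this construction. It should not suggest that seventeen gates are necessary, which neither you nor the paper proves.
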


\begin{proof}
The first hidden layer is $\widehat R(x)$.
Its weights are $0$ and $\pm1$.

Each of the $39$ hexagon gate forms $F$ is a combination of the twelve maxima with integer coefficients summing to zero, as noted after \Cref{def:hexagon-forms}.
By \Cref{lem:hexagon-linear}, $F(x)=\widehat F(\widehat R(x))$ for the linear form
\[
\widehat F=\sum c_{ab}r_{ab}-\left(\kappa_2+\kappa_4\right)\left(r_{12}-r_{21}\right)-\kappa_4\left(r_{24}-r_{42}\right)-\kappa_5\left(r_{15}-r_{51}\right)
\]
on the outputs.
Its coefficients are integers, because the $c_{ab}$ and hence the $\kappa_\ell$ are integers.

The linear term of \Cref{eq:hexagon-relu} is $L={\frac13(\mu_1+\mu_2+\mu_3)}$, which equals ${\frac13(m_{12}+m_{23}+m_{34})}$.
Its coefficients sum to $1$.
The chosen gates of its three pairs are $r_{12}$, $r_{32}$ and $r_{34}$, with second indices $2$, $2$ and $4$.
So $\kappa_2=\frac23$, $\kappa_4=\frac13$ and $\kappa_1=\kappa_5=0$, and \Cref{lem:hexagon-linear} gives
\[
L=\frac13\left(r_{12}+r_{32}+r_{34}\right)+x_1-w_{12}-\frac13w_{24}.
\]
Writing $x_1=q_+-q_-$, $w_{12}=r_{12}-r_{21}$ and $w_{24}=r_{24}-r_{42}$, the corresponding form on the outputs is
\[
\widehat L=q_+-q_--\frac23r_{12}+r_{21}+\frac13r_{32}+\frac13r_{34}-\frac13r_{24}+\frac13r_{42}.
\]
Thus $L(x)=\widehat L(\widehat R(x))$, and the coefficients of $\widehat L$ lie in $\frac13\Z$.

The second hidden layer has the $39$ gates $\ReLU(\widehat F(\widehat R))$, one for each hexagon gate form, and the two gates $\ReLU(\widehat L(\widehat R))$ and $\ReLU(-\widehat L(\widehat R))$.
Its width is $39+2=41$.
The readout gives weight $\frac16$ to each gate of a positive form, $-\frac16$ to each gate of a negative form, and weights $1$ and $-1$ to the last two gates.
Since $\ReLU(t)-\ReLU(-t)=t$, the last two gates contribute $L(x)$.
The output is therefore the right side of \Cref{eq:hexagon-relu}, which equals $\Max_6(x)$ by \Cref{thm:hexagon}.
The first-layer weights are integers, the second-layer weights lie in $\Z\cup\frac13\Z=\frac13\Z$, and the readout weights lie in $\{\pm\frac16,\pm1\}$.
All of them are integer multiples of $\frac16$, and every bias is zero.\qedhere
\end{proof}

The same network computes the maximum of fewer numbers.
For $1\le n\le6$, write $P$ for the $6\times n$ matrix that repeats the last coordinate:
\[
P(x_0,\ldots,x_{n-1})=(x_0,\ldots,x_{n-1},x_{n-1},\ldots,x_{n-1}).
\]
Then $\Max_6(Px)=\Max_n(x)$, so replacing the first weight matrix $W_1$ by $W_1P$ gives a zero-bias network $n\to17\to41\to1$ computing $\Max_n$.
The entries of $W_1P$ are sums of entries of $W_1$, so they are again integers.
We claim no optimality for either width of these networks.

\phantomsection\addcontentsline{toc}{section}{References}

\newcommand{\etalchar}[1]{$^{#1}$}

\end{document}